\newcommand{\tabincell}[2]{\begin{tabular}{@{}#1@{}}#2\end{tabular}}  
\begin{document}

\title{MOBA-Slice: A Time Slice Based\\Evaluation Framework of Relative Advantage\\between Teams in MOBA Games}

\titlerunning{MOBA-Slice: An Evaluation Framework of MOBA Games}

\author{Lijun Yu\inst{1,2} \and
Dawei Zhang\inst{2} \and
Xiangqun Chen\inst{1} \and
Xing Xie\inst{2}}

\authorrunning{L. Yu et al.}

\institute{Peking University
\email{yulijun@pku.edu.cn, cherry@sei.pku.edu.cn} \and
Microsoft Research Asia
\email{zhangdawei@outlook.com, xing.xie@microsoft.com}
}
\maketitle

\begin{abstract}
Multiplayer Online Battle Arena (MOBA) is currently one of the most popular genres of digital games around the world. The domain of knowledge contained in these complicated games is large. It is hard for humans and algorithms to evaluate the real-time game situation or predict the game result. In this paper, we introduce \textbf{MOBA-Slice}, a time slice based evaluation framework of relative advantage between teams in MOBA games. MOBA-Slice is a quantitative evaluation method based on learning, similar to the value network of AlphaGo. It establishes a foundation for further MOBA related research including AI development. In MOBA-Slice, with an analysis of the deciding factors of MOBA game results, we design a neural network model to fit our discounted evaluation function. Then we apply MOBA-Slice to Defense of the Ancients 2 (DotA2), a typical and popular MOBA game. Experiments on a large number of match replays show that our model works well on arbitrary matches. MOBA-Slice not only has an accuracy 3.7\% higher than DotA Plus Assistant\footnote{A subscription service provided by DotA2} at result prediction, but also supports the prediction of the remaining time of a game, and then realizes the evaluation of relative advantage between teams.
\keywords{Computer Games \and Applications of Supervised Learning \and Game Playing and Machine Learning}
\end{abstract}

\section{Introduction}
Multiplayer Online Battle Arena (MOBA) is a sub-genre of strategy video games. Players of two teams each control a playable character competing to destroy the opposing team's main structure, with the assistance of periodically spawned computer-controlled units. Figure \ref{fig:mobamap}\footnote{\url{https://en.wikipedia.org/wiki/File:Map_of_MOBA.svg}} is a typical map of a MOBA genre game. MOBA is currently one of the most popular genres of digital games around the world. Among championships of MOBA globally, Defense of the Ancients 2 (DotA2) has the most generously awarded tournaments. DotA2 is a typical MOBA game in which two teams of five players collectively destroy enemy's structure, Ancient, while defending their own. The playable characters are called heroes, each of which has its unique design, strengths, and weaknesses. The two teams, Radiant and Dire, occupy fortified bases called plateau in opposite corners of the map as Figure \ref{fig:dotamap}\footnote{\url{https://dota2.gamepedia.com/File:Minimap_7.07.png}} shows.

\begin{figure}[htb!]
\centering
\subfigure[A Typical MOBA Map]{
\includegraphics[width=5.5cm]{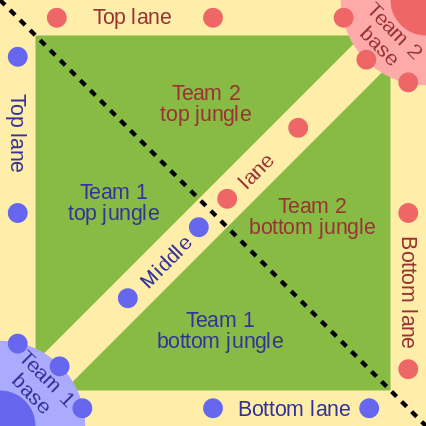}
\label{fig:mobamap}}
\subfigure[DotA 2 Mini Map]{
\includegraphics[width=5.5cm]{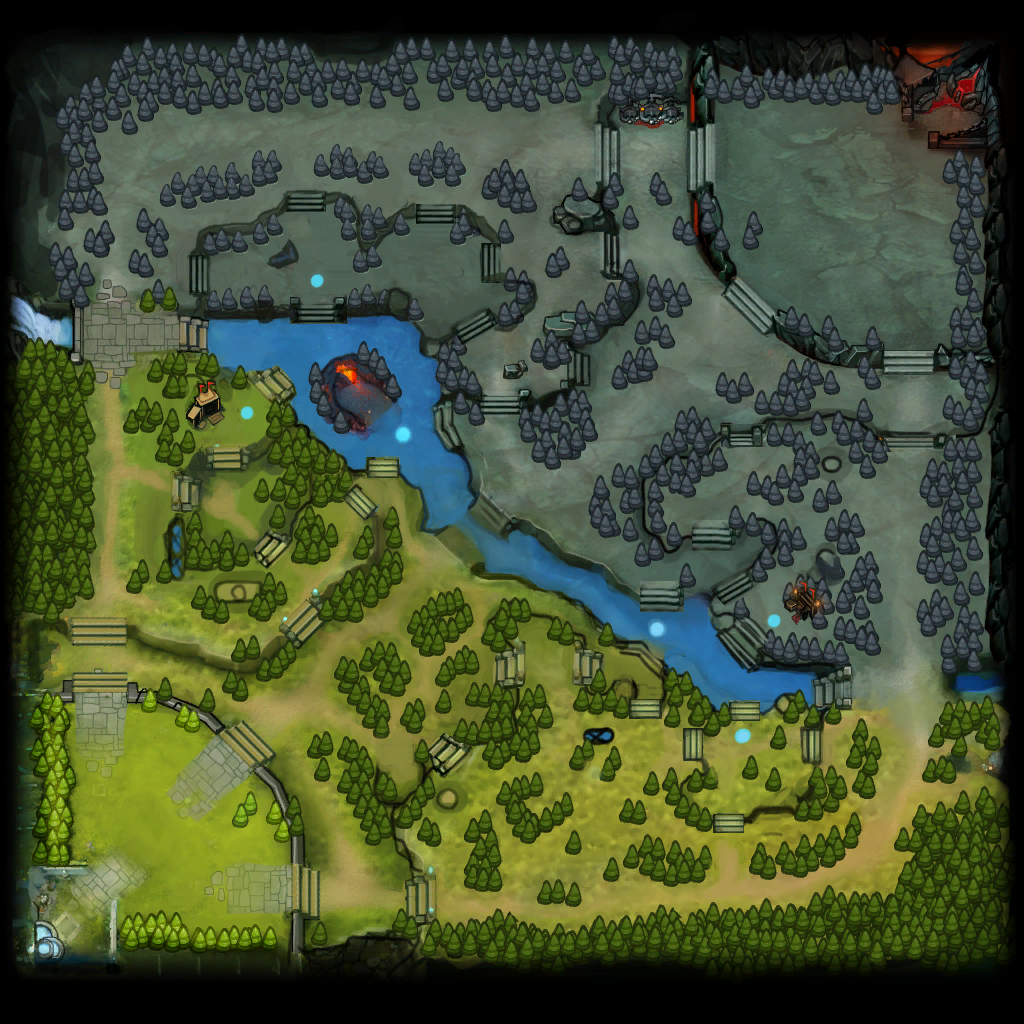}
\label{fig:dotamap}}
\caption{Maps}
\end{figure}

In games with a scoring mechanism, we can easily tell which player or team has an advantage from the scores. But the design of MOBA games such as DotA2 is complicated, with lots of variables changing during the whole game. So it is hard to evaluate the real-time game situation in such a large domain of knowledge. Traditionally, players and commentators assess the relative advantage by intuitive feeling, their own experience and fuzzy methods. No unified standard has been proposed, to the best of our knowledge. Such evaluation is needed in further research related to MOBA. It plays an essential role in developing artificial intelligence for MOBA games, such as working as the reward function in reinforcement learning models~\cite{Sutton:1998:IRL:551283} or the evaluation function in Monte Carlo planning models~\cite{Chung2005Monte}. In strategy analysis, the effectiveness of strategies can also be estimated by the change of relative advantage between teams.

In this paper, we introduce MOBA-Slice, which is able to evaluate any time slice of a game quantitatively. Different from manually designed evaluation function, MOBA-Slice provides a model that learns from data, which is similar to the value network of AlphaGo~\cite{silver2016mastering}. It establishes a foundation for further MOBA related research including AI development and strategy analysis.

The main contribution of this paper is listed below.
\begin{enumerate}
\item \emph{We introduce MOBA-Slice, a time slice based evaluation framework of relative advantage between teams in MOBA games.} We analyze the deciding factors of MOBA game result. A discounted evaluation function is defined to compute the relative advantage. We design a supervised learning model based on Neural Network to do this evaluation. MOBA-Slice is able to predict the result and remaining time of ongoing matches.
\item \emph{We apply MOBA-Slice to DotA2 and prove the effectiveness of MOBA-Slice with experiments.} We embody MOBA-Slice on a typical MOBA game, DotA2. We process a large number of DotA2 match replays to train our model. Experiments show that the model is able to evaluate time slices of arbitrary DotA2 matches. In the aspect of predicting the game result, MOBA-Slice has an accuracy 3.7\% higher than DotA Plus Assistant.
\end{enumerate}

\section{MOBA-Slice}
\subsection{MOBA Game Result Analysis}
In a MOBA game, the final victory is of the most significance. MOBA Game Result (\textbf{MGR}) analysis is defined to describe the deciding factors of the result of a match. For a certain time point, the future result of the game is considered related to two aspects, current state and future trend. The current state describes the game situation at this specific time slice, which is the foundation of future development. The future trend represents how the match will develop from the current state. Figure \ref{fig:mgr} shows the content of MGR analysis.
\begin{figure}[htb!]
\centering
\includegraphics[width=10cm]{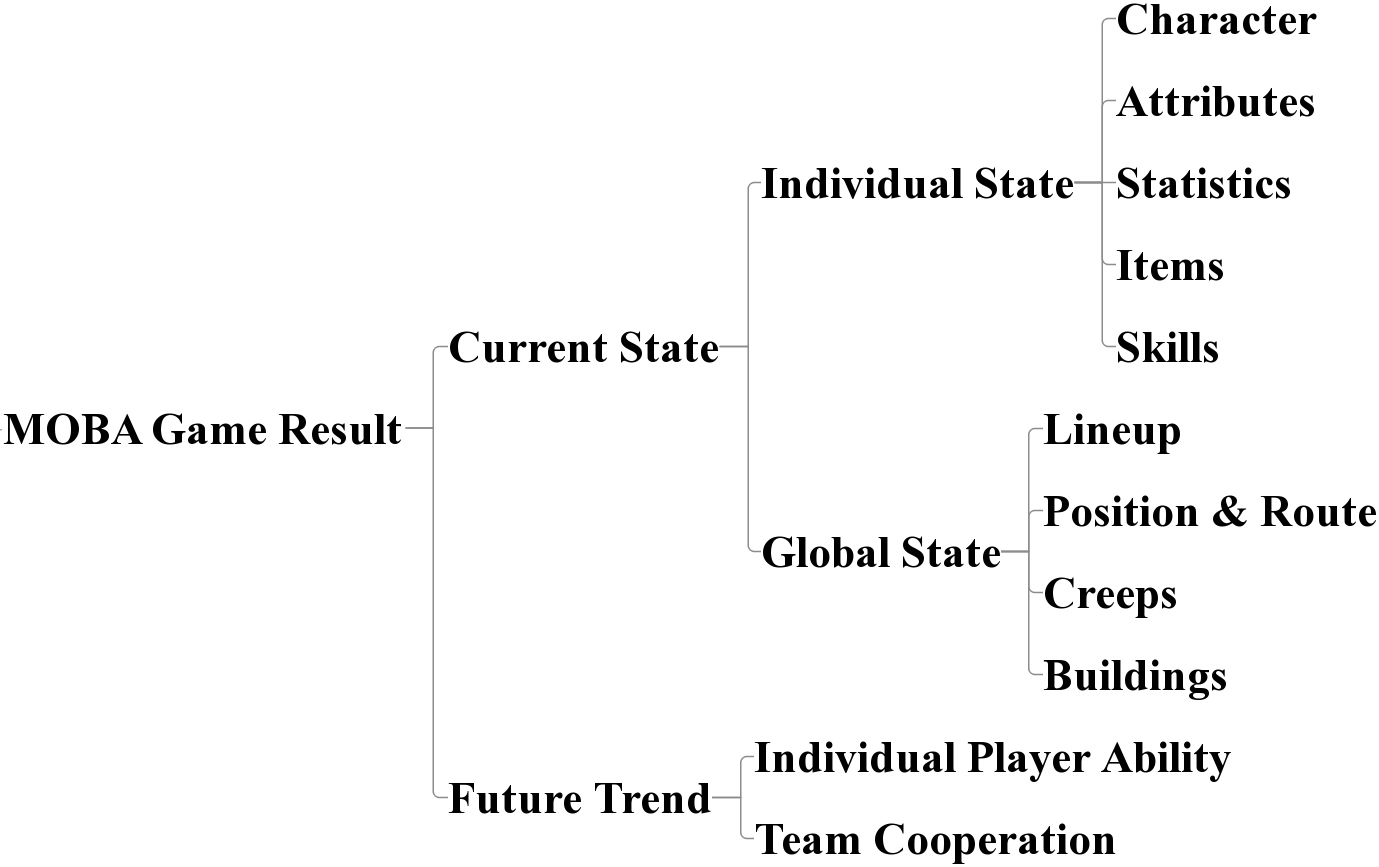}
\caption{MGR Analysis}
\label{fig:mgr}
\end{figure}

In details, the current state includes every information we can get to know about the game from a particular time slice. As MOBA are multiplayer games where each player controls one character, we further divide the factors of current status into individual and global state. The individual state contains the information of each character, including its type, attributes such as position and experience, statistics such as the count of deaths, the items it possesses, the skills it masters, and so on. The global state describes the background environment and the relationship between characters. The global information includes the lineup depicting the types of characters in both teams. Lineup can also reveal the advantage or disadvantage of a specific combination of characters. The position and selected route of characters, the status of creeps and buildings are considered in global state, as well. All the information mentioned above that will be used to represent current state can be found in game records, such as replay files.

Different from the current state, the part of future trend focuses on the level of players. Given the same status as a start, different players and teams will have distinct game processes and lead to various game results. We also divide this part into individual ability and team cooperation. The ability of each player individually can be measured through ranking systems such as TrueSkill~\cite{Sch2007TrueSkill,Dangauthier2007TrueSkill}. We further include the part of team cooperation, as good members do not necessarily mean a strong team. The detailed method to evaluate team cooperation still requires further research and is not covered in this paper.

\subsection{Discounted Evaluation}
MGR analysis describes how the result of a match comes out, and reversely we can use the \emph{future} result to evaluate the \emph{current} states. In Q learning~\cite{Watkins1992}, the discount factor $\gamma$ represents the difference in importance between future rewards and present rewards. We can let $\gamma=\frac{1}{1+r}$, where $r$ is the discount rate. The current value of a reward $R_a$ after time $t$ is calculated by $\gamma^tR_a$. Inspired by this, in the evaluation of current situation, we regard the future victory as a reward. The farther we are from the victory, the less the current value of its reward is. We use the logarithmic form of discount factor to simplify exponent operation.
 
\begin{definition}
The function of discounted evaluation $\mathit{DE}$ for a time slice $\mathit{TS}$ is defined as:
\begin{equation}\label{equ:f}
\mathit{DE_{TS}}(R, t) = \frac{1}{\ln(1+r)^t} \times R = \frac{R}{\alpha t}
\end{equation}
where $\alpha = \ln(1+r$), $t$ is the remaining time of the game, and
\begin{equation}
R = \left\{
\begin{aligned}
1, \mbox{when Team A wins} \\
-1, \mbox{when Team B wins}
\end{aligned}
\right.
\end{equation}
\end{definition}
$\mathit{DE_{TS}}$ has several appealing properties:
\begin{enumerate}
\item The sign of its value represents the final result, positive for A's victory, negative for B's.
\item Its absolute value is inversely proportional to $t$.
\item The value approximately represents the advantage team A has in comparison to team B.
\end{enumerate}

Here is some explanation of property 3. In normal cognition, greater advantage indicates more probability of winning, which would result in less time to end the game. So we can suppose that there is a negative correlation between the advantage gap between two teams and $t$. This assumption may be challenged by the intuition that the game situation fluctuates and even reverses at times. We can divide such fluctuation into random and systematic ones. The players' faults that occur randomly can be smoothed in a large scale of data. Systematic fluctuation is considered to be the result of misinterpretation of advantage. For example, in games where team A wins, if the intuitionistic advantage of team A usually gets smaller after certain time slice $x$, there is a reason to doubt that the advantage of team A at $x$ is over-estimated. Traditional ways to evaluate advantage usually ignores the potential fluctuation and does not take game progress into consideration. In our method, we can suppose the absolute value of the advantage between teams keeps growing from the beginning as a small value till the end. The value of function $\mathit{DE_{TS}}$ changes in the same way.

Although the values of both $R$ and $t$ are unknown for a current time slice in an ongoing match, we can train a supervised learning model to predict the value of function $\mathit{DE_{TS}}$ for each time slice. Based on property 3, the model would be a good evaluator for relative advantage between teams.

\subsection{Time Slice Evaluation Model}
We intend to design a supervised learning model which takes time slices as input and outputs the value of function $\mathit{DE_{TS}}$.  Due to the current limitation of research, we only consider the factors in the part of current state in MGR analysis while designing models.

The structure of this Time Slice Evaluation (\textbf{TSE}) model is shown in Figure \ref{fig:tse}. TSE model contains two parts of substructures. The individual (Ind) part calculates the contribution of each character in the game separately, which corresponds to the individual state in MGR analysis. Different characters in MOBA games have distinctive design, strengths, and weaknesses. This part ignores any potential correlation between characters but learns the unique features of each character. The global (Glo) part calculates the contribution of all the characters in a match, corresponding to the global state in MGR analysis. This part takes all the characters in a match as a whole and is designed to learn the potential relationship of addition or restriction. To combine the Ind and Glo, the outputs of the two parts are fed to $l_c$ layers of $n_c$ neurons activated by $relu$ function~\cite{DBLP:journals/corr/abs-1710-05941}. The output is calculated by one neuron activated by $tanh$ function to get a result in the range $[-1, 1]$.

For a MOBA game which has $c_a$ playable characters in total, the Ind part consists $c_a$ parts of subnets, each of which calculates a character's contribution. For $c_m$ characters in a time slice, we use their corresponding subnets to calculate their contribution individually, sum for each team and then subtract team B from team A to get the final result. Usually, in MOBA games each character is not allowed to be used more than once, so each subnet is calculated at most once for a time slice. But in practice, there is still a problem for Ind part to train in batch. Different characters are used in different matches, so the outputs need to be calculated with different sets of subnets. The data path in the model varies for time slices from different matches.

\begin{figure}[htb!]
\centering
\subfigure[TSE Model]{
\includegraphics[height=5cm]{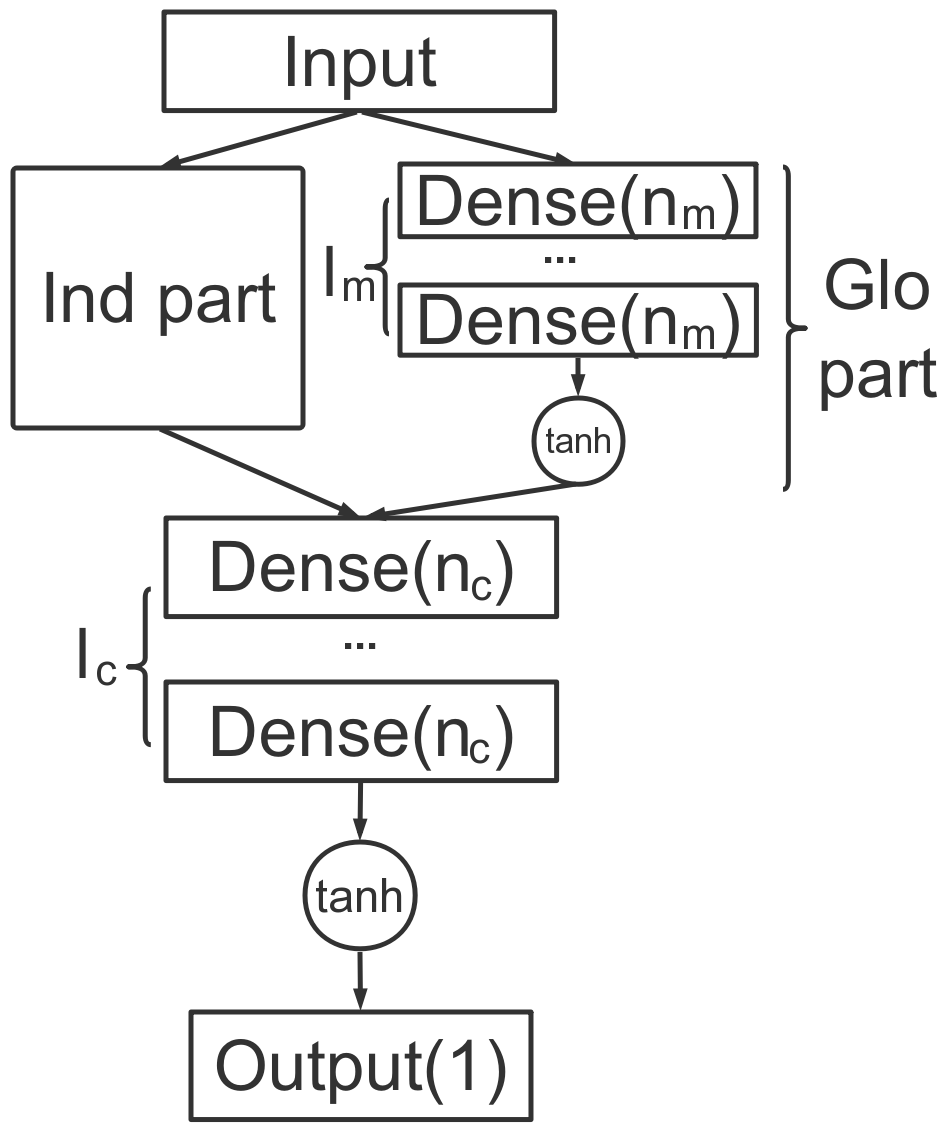}
\label{fig:tse}}
\subfigure[Subnet]{
\includegraphics[height=4cm]{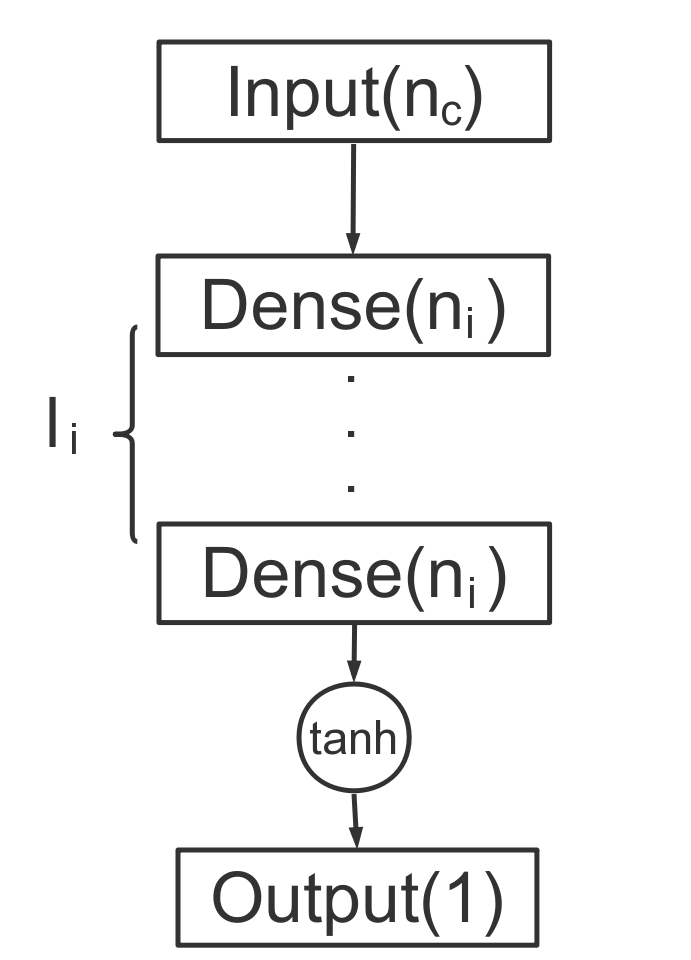}
\label{fig:fnn}}
\subfigure[Ind Part]{
\includegraphics[width=9cm]{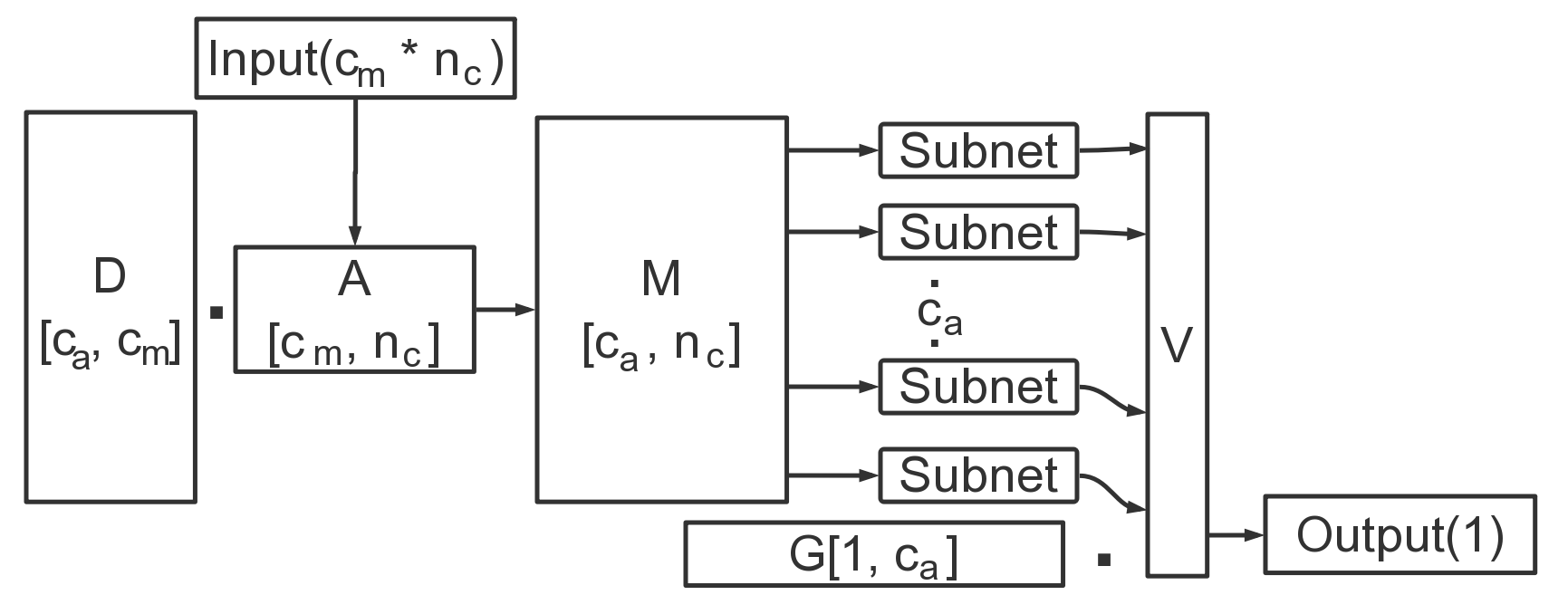}
\label{fig:ind}}
\caption{Model Structure}
\end{figure}

With the structure defined in Figure \ref{fig:ind}, batch training is implemented. If each character takes $n_c$ dimensions of data, the input of a time slice can be reshaped into a matrix $A \in \mathbb{R}^{c_m \times n_c}$. We create a distributer matrix $D$ for each input, which only contains $c_m$ non-zero elements. $D[i, j] = 1$ means that the id of the $i$-th character in this game is $j$, so the $i$-th row of $A$ will be at the $j$-th row in matrix $M=D \cdot A$ and will be fed to the $j$-th subnet. In vector $G \in \mathbb{R}^{1 \times c_a}$, $G[0, i]=1$ indicates character $i$ is in team A and $G[0, i]=-1$ indicates character $i$ is in team B. Vector $V \in \mathbb{R}^{c_a \times 1}$ contains the output of all $c_a$ subnets. We finally calculate the output by $\mathit{output}=G \cdot V$.

The subnets of Ind part are feed-forward neural networks of the same scale. Each subnet in Figure \ref{fig:fnn} takes in $n_c$ dimensions of input and outputs with a neuron activated by $\tanh$. Each subnet has $l_i$ hidden layers of $n_i$ neurons activated by $\mathit{relu}$ and applied dropout~\cite{srivastava2014dropout:}. at rate $r_d$

The Glo part is simply a multilayer feed-forward structure. It is like the subnet of Ind part but in a much larger scale. It takes in a full time slice vector and outputs with one neuron activated by $\tanh$. The Glo part has $l_m$ fully connected hidden layers with each $n_m$ neurons, activated by $\mathit{relu}$ function and applied dropout at rate $r_d$.

To get better distribution, we set the output of TSE model:
\begin{equation}
y = \frac{1}{\mathit{DE_{TS}}(R, t)} = \frac{\alpha t}{R}
\end{equation}
And it needs to be transformed to correspond to the range of $\tanh$ function:
\begin{equation}
y_\mathit{scaled} = -1 + 2 \times \frac{y - y_{min}}{y_{max} - y_{min}} \in [-1, 1]
\end{equation}
Let $\hat{y}$ be the prediction from model. To transform it back. we can rescale $\hat{y}$ by:
\begin{equation}
\hat{y}_\mathit{rescaled} = y_{min} + \frac{\hat{y} + 1}{2}\times(y_{max}-y_{min})
\end{equation}
and then prediction of $t$ and $R$ can be extracted as:
\begin{equation}
\hat{t}=\left.\left|\hat{y}_\mathit{rescaled}\right|\middle/\alpha\right.
\end{equation}
\begin{equation}
\hat{R}=\mathit{sign}(\hat{y}_\mathit{rescaled})
\end{equation}

As a regression problem, mean absolute error (MAE) and mean squared error (MSE) are chosen as metrics. MAE is also the loss function. We can further calculate a rescaled MAE by
\begin{equation}
\mathit{MAE}_\mathit{rescaled}(\hat{y}, y_\mathit{scaled}) = \mathit{MAE}(\hat{y}, y_\mathit{scaled}) \times \frac{y_{max} - y_{min}}{2}
\end{equation}

\begin{lemma}
\begin{equation}
\left|\hat{y}_\mathit{rescaled} - y\right| \ge \alpha\left|\hat{t} - t\right|
\end{equation}
\end{lemma}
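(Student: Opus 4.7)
The plan is to unpack the definitions of $y$, $\hat{y}_{\mathit{rescaled}}$, $\hat{t}$, $\hat{R}$ and reduce the claim to a purely elementary sign-analysis. Starting from $y = \alpha t / R$ with $R \in \{-1, +1\}$ and $t \ge 0$, I would first rewrite $y = R \alpha t$ (using $1/R = R$ for $R = \pm 1$), which exposes the fact that $|y| = \alpha t$ and $\mathrm{sign}(y) = R$. On the predicted side, the extraction formulas $\hat{t} = |\hat{y}_{\mathit{rescaled}}|/\alpha$ and $\hat{R} = \mathrm{sign}(\hat{y}_{\mathit{rescaled}})$ immediately give the dual identity $\hat{y}_{\mathit{rescaled}} = \hat{R} \alpha \hat{t}$.

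Substituting both expressions into the left-hand side and pulling out $\alpha > 0$, the inequality reduces to showing
\begin{equation}
\bigl|\hat{R}\,\hat{t} - R\,t\bigr| \;\ge\; \bigl|\hat{t} - t\bigr|,
\end{equation}
which is a statement about two nonnegative reals $t, \hat{t}$ and two signs $R, \hat{R} \in \{-1, +1\}$. From here I would split into two cases. If $\hat{R} = R$, the left-hand side equals $|R|\cdot|\hat{t} - t| = |\hat{t} - t|$, giving equality. If $\hat{R} = -R$, the left-hand side becomes $|{-R}\hat{t} - Rt| = \hat{t} + t$, and since $\hat{t}, t \ge 0$ the triangle inequality (or an immediate comparison) yields $\hat{t} + t \ge |\hat{t} - t|$.

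Combining the two cases and multiplying back by $\alpha$ yields the claimed bound, with equality exactly when the predicted winner agrees with the true one. The argument is essentially a one-line sign check, so I do not anticipate any real obstacle; the only point that needs a moment of care is being explicit about the implicit assumption $t \ge 0$ (remaining time is nonnegative), which is needed in the second case to conclude $\hat{t} + t \ge |\hat{t} - t|$.
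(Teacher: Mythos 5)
Your argument is correct and matches the paper's own proof essentially verbatim: both substitute $y=\alpha t/R$ and $\hat{y}_{\mathit{rescaled}}=\alpha\hat{t}/\hat{R}$, then split on $\hat{R}=R$ versus $\hat{R}=-R$, using $|R|=|\hat{R}|=1$ and $t,\hat{t}\ge 0$ to get $\hat{t}+t\ge|\hat{t}-t|$ in the mismatched case. Your explicit note about needing $t,\hat{t}\ge 0$ is the same caveat the paper records at the end of its proof.
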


\begin{proof}
\begin{equation}
\begin{aligned}
\left|\hat{y}_\mathit{rescaled} - y\right| &= \left|\frac{\alpha\hat{t}}{\hat{R}} - \frac{\alpha t}{R}\right| \\
&=\begin{cases}
\alpha\left|\hat{t} - t\right| & \mbox{when } \hat{R}=R \\
\alpha\left|\hat{t} + t\right| \ge \alpha\left|\hat{t} - t\right| & \mbox{when } \hat{R}=-R
\end{cases}
\end{aligned}
\end{equation}
where $\left|R\right|=\left|\hat{R}\right|=1$ and $t, \hat{t}\ge0$
\end{proof}

\begin{theorem}
\begin{equation}
\mathit{MAE}_\mathit{rescaled}(\hat{y}, y_\mathit{scaled}) \ge \alpha\mathit{MAE}(\hat{t}, t)
\end{equation}
\end{theorem}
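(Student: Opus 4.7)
The plan is to reduce the theorem to the preceding lemma by (i) rewriting the rescaled MAE as an ordinary MAE between $\hat{y}_\mathit{rescaled}$ and $y$, and (ii) applying the lemma pointwise inside the average.

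First I would unfold the affine scaling. Since $y_\mathit{scaled}$ and $\hat{y}$ both sit in $[-1,1]$ while $y$ and $\hat{y}_\mathit{rescaled}$ live on the original scale, the two rescaling formulas give
\begin{equation}
\hat{y}_\mathit{rescaled} - y \;=\; \frac{y_{max}-y_{min}}{2}\,(\hat{y} - y_\mathit{scaled}),
\end{equation}
so taking absolute values and averaging over the dataset of size $N$ yields
\begin{equation}
\mathit{MAE}(\hat{y}_\mathit{rescaled}, y) \;=\; \frac{y_{max}-y_{min}}{2}\,\mathit{MAE}(\hat{y}, y_\mathit{scaled}) \;=\; \mathit{MAE}_\mathit{rescaled}(\hat{y}, y_\mathit{scaled}),
\end{equation}
exactly the definition the paper gave. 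So the left-hand side of the theorem is literally the sample mean of $|\hat{y}_\mathit{rescaled,i} - y_i|$.

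Next I would apply the lemma to each time slice $i$ to get
\begin{equation}
|\hat{y}_\mathit{rescaled,i} - y_i| \;\ge\; \alpha\,|\hat{t}_i - t_i|,
\end{equation}
and then average both sides over $i=1,\dots,N$. Linearity of the average together with monotonicity of the inequality immediately gives
\begin{equation}
\frac{1}{N}\sum_{i=1}^{N}|\hat{y}_\mathit{rescaled,i}-y_i| \;\ge\; \alpha\cdot\frac{1}{N}\sum_{i=1}^{N}|\hat{t}_i-t_i| \;=\; \alpha\,\mathit{MAE}(\hat{t},t),
\end{equation}
which combined with the identification in the previous step is the theorem.

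There is not really a hard step here; the whole argument is ``lemma plus linearity of expectation.'' The only place that needs care is the bookkeeping for the rescaling, in particular confirming that the factor $\tfrac{y_{max}-y_{min}}{2}$ that the paper bakes into $\mathit{MAE}_\mathit{rescaled}$ is exactly the factor produced by inverting the affine map on both $\hat{y}$ and $y_\mathit{scaled}$, so that no extra constants leak in. Once that identification is made, the theorem is just the lemma averaged over the data.
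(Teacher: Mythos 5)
Your proof is correct and follows essentially the same route as the paper: identify $\mathit{MAE}_\mathit{rescaled}(\hat{y}, y_\mathit{scaled})$ with $\mathit{MAE}(\hat{y}_\mathit{rescaled}, y)$ via the affine rescaling, then apply the lemma pointwise and average. Your version is in fact slightly more careful, since you explicitly verify the scaling identity $\hat{y}_\mathit{rescaled} - y = \frac{y_{max}-y_{min}}{2}(\hat{y} - y_\mathit{scaled})$ that the paper only asserts implicitly.
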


\begin{proof}
\begin{equation}
\begin{aligned}
\mathit{MAE}_\mathit{rescaled}(\hat{y}, y_\mathit{scaled}) &= \mathit{MAE}(\hat{y}_\mathit{rescaled}, y) \\
&= \frac{\sum_{i=1}^{N}{\left|\hat{y}_\mathit{rescaled} - y\right|}}{N} \\
&\ge \frac{\sum_{i=1}^{N}{\alpha\left|\hat{t} - t\right|}}{N} \\
&=\alpha\mathit{MAE}(\hat{t}, t)
\end{aligned}
\end{equation}
\end{proof}
So $\left.\mathit{MAE}_\mathit{rescaled}(\hat{y}, y_\mathit{scaled})\middle/\alpha\right.$ proves to be the upper bound of $\mathit{MAE}(t, \hat{t})$. It provides a more intuitive way to evaluate the model's effectiveness, as its value can be viewed in units of time to reveal the mean error of prediction.

\section{Experiments}
\subsection{Apply MOBA-Slice to DotA2}
We choose DotA2 as a typical MOBA game to apply MOBA-Slice. DotA2 generates a replay file to record all the information in a match. An open source parser from OpenDota project\footnote{Replay parser from OpenDota project: \url{https://github.com/odota/parser}} can parse the replay file and generate \emph{interval} messages every second to record the state of each character. The following information contained in \emph{interval} messages is chosen to describe a time slice in current experiments.

\begin{itemize}
	\item Character - hero id
	\item Attributes: life state, gold, experience, coordinate(x, y)
	\item Statistics: 
	\begin{itemize}
		\item deaths, kills, last hit, denies, assists 
		\item stacked creeps, stacked camps, killed towers, killed roshans
		\item placed observer, placed sentry, rune pickup, team-fight participation
	\end{itemize}
	\item Items: 244 types
\end{itemize}

There are 114 characters which are called heroes in DotA2. In a match, each team chl4
ooses five heroes without repetition. A hero may die at times and then respawn after a period of delay. The life state attribute is used to record whether a hero is alive or dead. Gold and experience are two essential attributes of a hero. Gold is primarily obtained by killing enemy heroes, destroying enemy structures and killing creeps. Heroes can use their gold to purchase items that provide special abilities. A hero's experience level begins at level 1 and grows as the game goes on. It is highly related to the level of a hero's ability. The position of a hero on the map is given in coordinate representation x and y. 

Many kinds of statistics are generated automatically by the game engine and the replay parser to help analyze the game. Deaths, kills, last hit, denies and assists record these historical events during the fights. Stacked creeps, camps and killed towers, roshans record the hero's fight history regarding these units. Invisible watchers, observer and sentry, are helpful items that allow watching over areas and spy on enemies. Runes are special boosters that spawn on the game map, which enhance heroes' ability in various ways. A team-fight is a fight provoked by several players with a considerable scale. There are 244 types of items in DotA2 that can be purchased by a hero, according to our statistics. 

In the aspect of global state, lineup is represented by the id of 10 heroes. Position and route are reflected in the coordinates of heroes. Since skill of heroes and status of creeps and buildings are not recorded in \emph{interval} messages, we do not involve these fields of information in current experiments. Using lower-level parser such as manta\footnote{Manta: \url{https://github.com/dotabuff/manta}} and clarity\footnote{Clarity 2: \url{https://github.com/skadistats/clarity}} to extract data from raw Protobuf structured replay is likely to address this problem, but it significantly increases the complexity of data processing.

\subsection{Data Processing}
The digital distribution platform developed by Valve Corporation, Steam, officially provides APIs\footnote{Documentation of Steam APIs for DotA2: \url{https://wiki.teamfortress.com/wiki/WebAPI#Dota_2}} for DotA2 game statistics. We use \emph{GetLeagueListing} and \emph{GetMatchHistory} methods to get the list of all matches of professional game leagues\footnote{Data processing took place in Oct. 2017}. We then use OpenDota's API\footnote{Documentation of OpenDota API for match data: \url{https://docs.opendota.com/#tag/matches}} to get detailed information of matches including the URL of its replay file on Valve's servers. After downloading and decompressing them, we send replay files to OpenDota's parser. With a massive cost of CPU time, we get downloaded and parsed replay files of 105,915 matches of professional leagues, with a total size of over 3 TB.

We scan parsed replay files and create time slice vectors every 60 seconds of game time. In a vector, each of ten heroes has 263 dimensions, including 1 for hero id, 5 for attributes, 13 for statistics and 244 for items. Along with one dimension recording the game time, each time slice vector has 2,631 dimensions. We also generate the value of function $\mathit{DE_{TS}}$ for each time slice, with the already known game result. In this process, we notice that about 34 thousand replays are not correctly parsed due to game version issues, file corruption or known limitation of the parser. After dropping these invalid data, we get 2,802,329 time slices generated from 71,355 matches. The distribution of match length is shown in Figure \ref{fig:len}. The average length is about 40 minutes, with few matches shorter than 10 minutes and very few matches longer than 100 minutes. 

\begin{figure}[htb!]
\centering
\includegraphics[width=9cm]{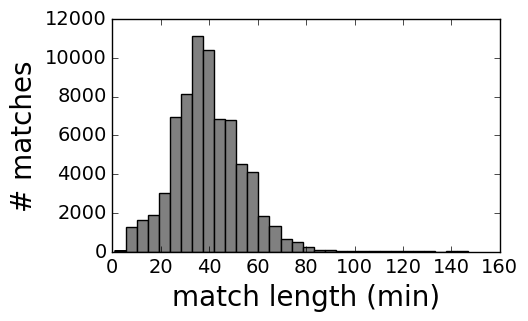}
\caption{Distribution of Match Length}
\label{fig:len}
\end{figure}

\subsection{First Training of TSE Model}
Considering that the beginning part of a game might be erratic, we choose to only use time slices from the last 50\% time in each match in the first experiment. Experiments with whole matches and other range of game time will be talked about in Section \ref{sec:tbp}. To simplify and get better distribution, we set $r = e - 1$ so $\alpha = 1$. \emph{Adam} optimizer~\cite{kingma2015adam:} is applied in the experiments.

To better know the optimized hyper-parameters of TSE model, we begin with training and testing partially. We train separately the Ind part with the input described above and $y_{scaled}$ as its output. Each subnet of the Ind part takes in $n_c=263$ dimensions representing one hero, including 1 for game time, 5 for attributes, 13 for statistics and 244 for items, which is similar to one-tenth of a time slice vector except that hero id is replaced. Due to its structure, only $\frac{c_m}{c_a}$ of the Ind part is trained with every piece of data, so the training process is slower than a fully connected model. For subnets, $l_i=3, n_i=40, r_d=0.5$ are proved to be the best, according to our experiments. We use 90\% of the dataset for training, 5\% for validation and 5\% for testing. This trained Ind part shows an MAE of $0.1523$ which indicates a prediction error of $11.15$ minutes.

For the Glo part, we take the time slice vectors directly as input and $y_{scaled}$ as output. $n_m=400, l_m=4, r_d=0.5$ are chosen as the hyper-parameters of the best performance. On the same dataset as the Ind part, the Glo part shows a rescaled MAE of $7.85$ minutes.

For the remaining part of TSE model, $n_c=4, l_c=3$ are set. When we train the whole TSE model, we regard it as a multi-output model to avoid training different parts unequally. The loss is calculated by Equation \ref{equ:loss}. $\hat{y}_{Ind}$, $\hat{y}_{Glo}$ and $\hat{y}$ are the outputs of Ind part, Glo part, and the whole model.
\begin{equation}
loss=\mathit{MAE}(y, \hat{y}) + \mu \times \mathit{MAE}(y, \hat{y}_{Ind}) + \nu \times \mathit{MAE}(y, \hat{y}_{Glo})
\label{equ:loss} 
\end{equation}
$\mu=0.3, \nu=0.3$ are proved to be an efficient set of parameters in this training. The whole model is trained on the same dataset as previous at first. Then a 10-fold cross validation at match level is applied to provide reliable result.

\begin{table}[htb!]
\centering
\caption{Metrics of TSE Model}
\begin{tabular}{|c|c|c|c|}
\hline
 & $\mathit{MAE}$ & $\mathit{MSE}$ & Rescaled $\mathit{MAE}$(minutes) \\ \hline
Blind prediction & 0.5178 & 0.3683 & 37.91 \\ \hline
Ind part & 0.1523 & 0.0339 & 11.15 \\ \hline
Glo part & 0.1072 & 0.0290 & 7.85\\ \hline
TSE model & 0.1050 & 0.0287 & 7.69 \\ \hline \hline
\tabincell{c}{TSE model\\(10-fold cross validation)} & 0.10539 & 0.02794 & 7.716\\ \hline
\end{tabular}
\label{tab:res}
\end{table}

The result of experiments on partial and full TSE model is shown in Table \ref{tab:res}. The performance of TSE model indicates that it is an applicable model to fit function $\mathit{DE_{TS}}$. Both the Ind part and the Glo part are valid compared with blind prediction. The Glo part has a better performance than the Ind part. That means only focusing on the individual contribution separately would result in a significant loss of information. The correlation between heroes plays an essential role in the matches. TSE model combines the two parts and is better than either part, so the validity of both Ind part and Glo part is proven. As TSE model is designed based on the split of individual and global state in MGR analysis, its performance also supports MGR analysis.

\subsection{Prediction on Continuous Time Slices}

In the previous experiment, time slices are randomly shuffled. We can not discover how TSE model works on continuous time slices from a match. This time we feed time slices generated from a match to the trained TSE model in chronological order and observe its performance on each match. Figure \ref{fig:resm} shows the performance of TSE model on four sample test matches. The horizontal axis $t$ represents the remaining time before the end of current match, which declines as the match goes on. The vertical line in the middle splits the graphs into two parts. The right side of it is the last 50\% time of the game, which this TSE model is trained to fit. 

\begin{figure}[htb!]
\centering
\subfigure[]{
\label{fig:result1}
\includegraphics[width=5.5cm]{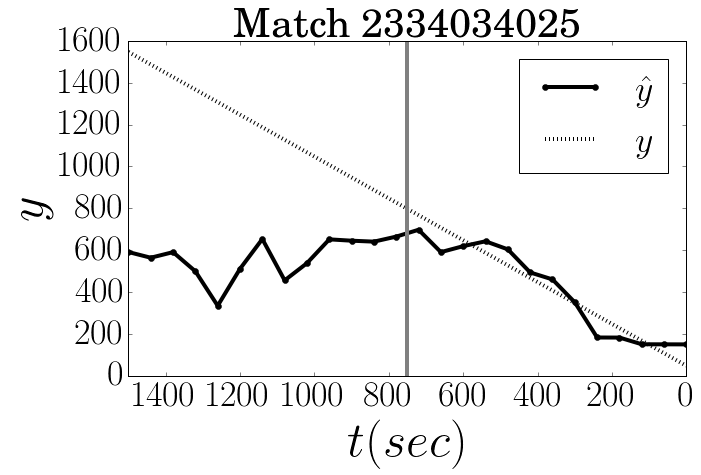}}
\subfigure[]{
\label{fig:result2}
\includegraphics[width=5.5cm]{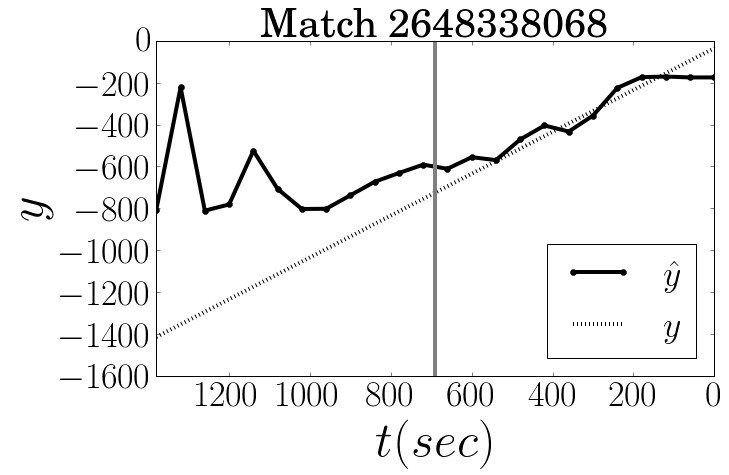}}
\subfigure[]{
\label{fig:result3}
\includegraphics[width=5.5cm]{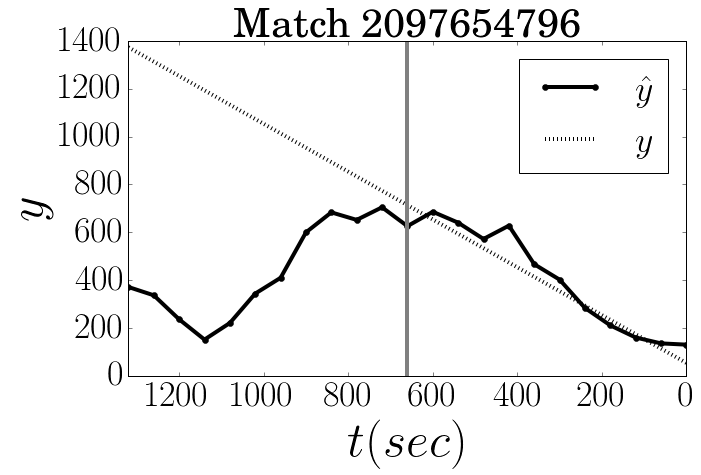}}
\subfigure[]{
\label{fig:result4}
\includegraphics[width=5.5cm]{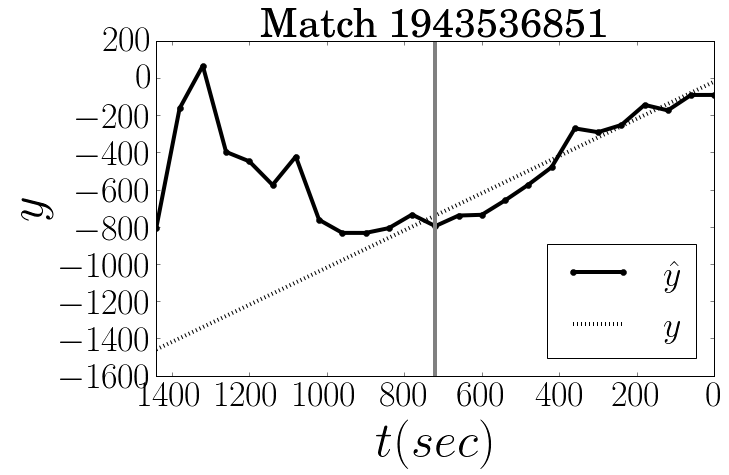}}
\caption{Performance on Sample Matches}
\label{fig:resm}
\end{figure}

Focusing on the right side, we can see that TSE model successfully fits $\mathit{DE_{TS}}$ in the last half of matches. It works well on matches of arbitrary lengths no matter Radiant or Dire wins. The predicted value usually fluctuates around the real value. The missing variables such as skills of heroes can be a major reason for such fluctuation. 

The observations on whole matches above provide proof of the effectiveness of MOBA-Slice on DotA2. As we can see from the figures, TSE model trained with data of last half matches can evaluate time slices from the last half of new matches thoroughly. This result shows that the functional relationship between the state of a time slice and the game result described in $\mathit{DE_{TS}}$ indeed exists, or TSE model would never be able to work efficiently on new matches. With the figures and the properties of $\mathit{DE_{TS}}$ explained previously, we can see that TSE model can do more than just evaluation. It is also useful enough to work as an online prediction model for the game result and the remaining time.

\subsection{Comparison with DotA Plus Assistant on Result Prediction}
In the progress of our research, a monthly subscription service called DotA Plus\footnote{Homepage of DotA Plus: \url{https://www.dota2.com/plus}} was unveiled in an update of DotA2 on March 12, 2018. As one part of DotA Plus service, Plus Assistant provides a real-time win probability graph for every match, as shown in Figure \ref{fig:plus}. It is praised as the "Big Teacher" by Chinese players due to its accurate prediction. Although Plus Assistant can not predict the remaining time of a match, its prediction of game result can be used to evaluate the corresponding ability of MOBA-Slice.

\begin{figure}
\centering
\includegraphics[width=9cm]{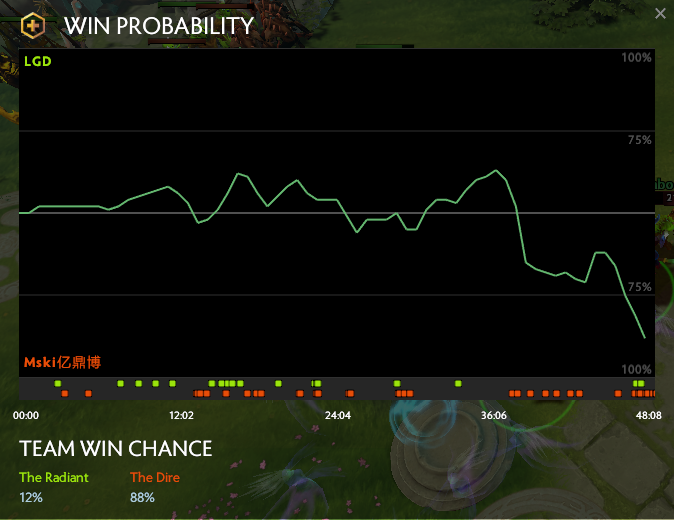}
\caption{Win Probability Graph of DotA Plus Assistant}
\label{fig:plus}
\end{figure}

To compare DotA Plus Assistant with MOBA-Slice, we have to choose matches played after its release date, so we can not use the previous dataset. The tournament of DotA2 Asian Championship\footnote{Homepage of DotA2 Asian Championship: \url{http://www.dota2.com.cn/dac/2018/index/?l=english}} was held from March 29th to April 7th. This tournament has 105 matches totally, but 33 matches are unusable for network failure or newly introduced heroes that are not included in the training data of our TSE Model. An upsetting fact is that the winning probability data is not available from the replay parser, and is suspected to be encrypted. So we manually download the valid 72 matches in DotA2 client and watch their replay to capture the winning probability graph. We discretize the winning probability into 3 values: Radiant's victory when above the middle line, Dire's when below it, and unknown when at it. We also feed these matches to the trained TSE model, and discretize the output according to the sign of value as property 1 of $DE_{TS}$ described: positive for Radiant's victory, negative for Dire's, and zero for unknown.

As we have the prediction from both models, we sample the prediction and compare with the real result to calculate the accuracy of a model at a specific game time percent. For example, the prediction results of MOBA-Slice on each match at the time point of 10\% of game time are used to calculate the accuracy of MOBA-Slice at 10\% time point. The unknown predictions are always counted as errors. We do this calculation every 10 percent of time for both models, and calculate the average for all time points. As the result in Table \ref{tab:plus}, the average accuracy of MOBA-Slice is 3.7\% higher than DotA Plus Assistant at predicting the game result.

\begin{table}[htb!]
\centering
\caption{Prediction Accuracy of MOBA-Slice and DotA Plus Assistant}
\begin{tabular}{|c|c|c|c|c|c|c|c|c|c|c|}
\hline
Game time percent & 10\% & 20\% & 30 \% & 40\% & 50\% & 60\% & 70\% & 80\% & 90\% & Average \\ \hline
DotA Plus Assistant & 0.4167 & 0.5139 & 0.5972 & 0.6111 & 0.6806 & 0.7500 & 0.7778 & 0.8472 & 0.9444 & 0.6821 \\ \hline
MOBA-Slice & 0.5694 & 0.5417 & 0.6111 & 0.7083 & 0.7083 & 0.7222 & 0.8056 & 0.8611 & 0.9444 & \underline{0.7191} \\ \hline
\end{tabular}
\label{tab:plus}
\end{table}

\subsection{Towards Better Performance}
\label{sec:tbp}
For the first 50\% time of the match in Figure \ref{fig:resm}, we can see that the error is much larger. One possible reason is that this TSE model is not trained to fit the first half of games, which contains different information from the last half. Changing the range of data is a way to fix this problem. On the other hand, larger instability at the beginning of matches also counts. The game situation may break our underlying assumption for the monotonicity of relative advantage between teams. We can not avoid such instability as the game situation may change at any time. But we can treat this as noise in the data. As long as the instability in training data is relatively small, the models are effective.

We want to see the disparity of instability in different parts of matches, so we train models with time slices from different intervals of game time. As TSE model takes a longer time to train due to the structure of Ind part, we choose to work with merely its Glo part. 

\begin{figure}
\centering
\includegraphics[width=10cm]{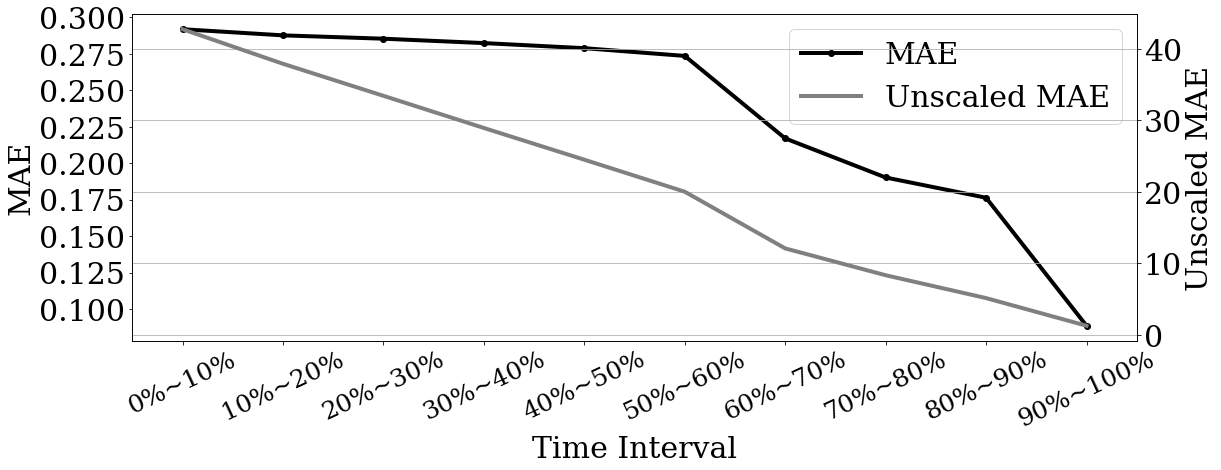}
\caption{Experiments on Different Intervals}
\label{fig:ex1}
\end{figure}

As shown in Figure \ref{fig:ex1}, experiments are carried out for every 10 percent of game time. We train with time slices from 0\%-10\% of game time and test on 0\%-10\%, and then repeat training and testing on 10\%-20\% and so on. The loss decreases as a match goes on, indicating the decline of instability. In other words, it is much more difficult to evaluate the beginning of a match than the ending. In interval 90\%-100\%, we can predict the game result with an average error of about 1 minute, which shows a high relevance of game situation and outcome. But we can not train models specific for an interval of game time to evaluate ongoing matches, as we do not know the length of a match until it ends. We further experiment on larger intervals to find out a proper range of training data for predicting arbitrary time slices.

Results in Figure \ref{fig:ex2} are much better than in Figure \ref{fig:ex1}, since relatively more training data is fed to the model. The same trend that loss decreases as the game progresses is seen in Figure \ref{fig:ex2}. However, if observed from right to left, Figure \ref{fig:ex2} shows something contrary to the common belief that training with more data results in better performance. We find that as time interval gets larger, the model learns from more time slices but the loss keeps growing. The instability of the beginning part seems so large as to worsen the performance. It appears to be a trade-off problem to choose the proper range of training data. The MAE in Figure \ref{fig:ex2} cannot be used to determine this as they are calculated on different test sets. We suppose the best choice needs to be found in accordance with the performance of MOBA-Slice in actual applications.

\begin{figure}
\centering
\includegraphics[width=10cm]{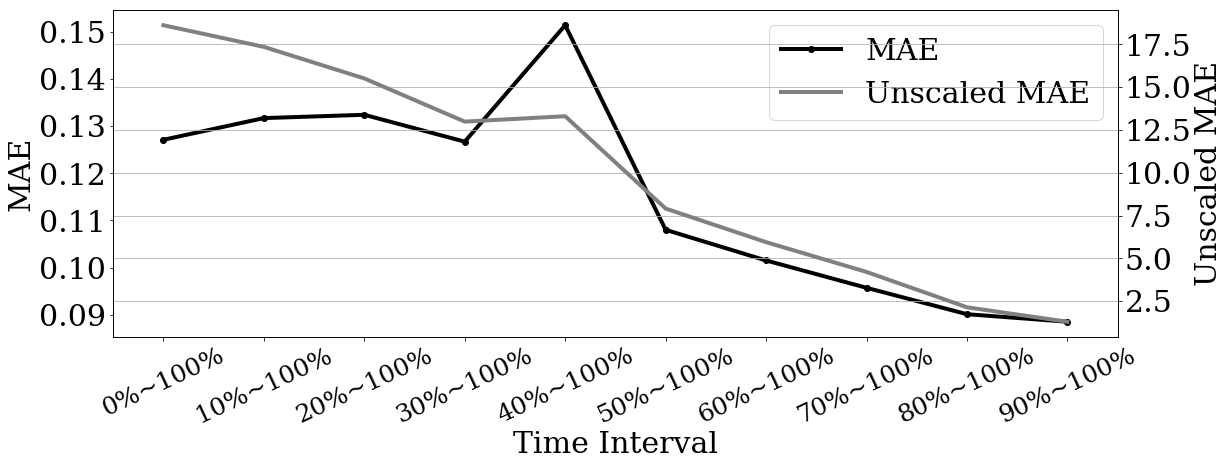}
\caption{Experiments on Larger Intervals}
\label{fig:ex2}
\end{figure}

\section{Related Work}
With the continuously growing number of players, MOBA games have been popular with young people all over the world. As the design of MOBA games is complicated, the large domain of knowledge contained in these games is of great research value. Research related to MOBA games mainly consists three aspects: strategy analysis, result prediction, and AI developing.

Strategy analysis is proved efficient at improving the performance of professional players. Yang and Roberts~\cite{Yang2013Knowledge} introduced a data-driven method to do post-competition analysis and knowledge discovery, and evaluated on 3 games: DotA, Warcraft \uppercase\expandafter{\romannumeral3} and Starcraft \uppercase\expandafter{\romannumeral2}. Hao et al.~\cite{Hao2015Player} studied player behavior and optimal team positions through clustering. Bauckhage's team~\cite{Bauckhage2014Beyond} introduced a spatio-temporal clustering method to partition game maps into meaningful areas. Cavadenti's group~\cite{Cavadenti2016What} implemented a data mining based method that discovers strategic patterns from historical behavioral traces to help players improve their skills. Hanke and Chaimowicz~\cite{AIIDE1715902} developed a recommendation system for hero line-ups based on association rules. In most strategy related researches, evaluation of a game situation or the game result is used as a part of the algorithm to tell the effectiveness of a strategy or to optimize toward a bigger winning rate. It is hard for such evaluation to fully utilize the multitudinous variables of MOBA games. In the research of strategy analysis, we also met the problem of evaluation of strategy. This ended up with the proposal of MOBA-Slice. MOBA-Slice is designed to provide reliable quantitative evaluation of game situations. We now suggest using MOBA-Slice as the evaluation algorithm in strategy analysis researches.

It is considered hard to predict the winners of a MOBA game, as the game situation is always changeful. There are so many factors and features to choose from to build a prediction model. Conley and Perry~\cite{ConleyHow} implemented K-Nearest Neighbor (KNN) algorithm to predict with merely hero lineup information. Wang~\cite{Wang2016Predicting} further used multi-layer feedforward neural networks and added game length information into inputs, but did not improve significantly. Almeida's team~\cite{Almeida2017Prediction} built classification models of Naive Bayes, KNN, and J48 based on the composition of heroes and the duration of match. Pu et al.~\cite{Pu2014Identifying} also worked towards identifying predictive information of winning team with graph modeling and pattern extraction. Different from static prediction in previous works, DotA Plus Assistant supports real-time prediction, but its algorithm remains business secret. As a nice property of MOBA-Slice, it supports real-time prediction of game results. And MOBA-Slice demonstrates better accuracy than DotA Plus Assistant.

Artificial intelligence of MOBA games interests many researchers. Many rule based bots and machine learning algorithms have been developed. Andersen et al.~\cite{Kresten2009EXPERIMENTS} examined the suitability of online reinforcement learning in real-time strategy games with Tank General as early as 2009. Synnaeve and Bessiere~\cite{Synnaeve2011A} used a semi-supervised method with expectation-maximization algorithm to develop a Bayesian model for opening prediction in RTS games. Kolwankar~\cite{Siddhesh2012Evolutionary} employed genetic algorithms to adjust actions in evolutionary AI for MOBA. Silva and Chaimowicz~\cite{Silva2016On} implemented a two-layered architecture intelligent agent that handles both navigation and game mechanics for MOBA games. Wisniewski and Miewiadomski~\cite{Wi2016Applying} developed state search algorithms to provide an intelligent behavior of bots. Pratama et al.~\cite{Pratama2017Fuzzy} designed AI based on fuzzy logic with dynamic difficulty adjustment to avoid static AI mismatching player and AI's difficulty level. AIIDE StarCraft AI competition~\cite{AIIDEStar} has been held every year since 2010, with continuously rising popularity. Evaluation of situation is often needed in artificial intelligence algorithms, which MOBA-Slice is designed to provide. MOBA-Slice can work as the reward function in reinforcement learning models~\cite{Sutton:1998:IRL:551283} and the evaluation function in Monte Carlo planning models~\cite{Chung2005Monte}.

\section{Conclusion}
MOBA-Slice is a well-designed framework which evaluates relative advantage between teams. It consists three parts: MGR analysis, discounted evaluation, and TSE model. With MGR analysis we manage to describe the deciding factors of MOBA game result. Discounted evaluation function $\mathit{DE_{TS}}$ has several appealing properties one of which is representing the relative advantage between teams. TSE model is designed to fit function $\mathit{DE_{TS}}$. Through applying MOBA-Slice to DotA2, we prove its effectiveness with experiments on a large number of match replays and comparison with DotA Plus Assistant. MOBA-Slice establishes a foundation for further MOBA related research which requires evaluation methods, including AI developing and strategy analysis.

Here are several aspects listed for future work. The part of the future trend in MGR analysis requires rating algorithm for players and teams. In the experiments, information of heroes' skills and the environment is not taken into consideration due to the current limitation of data processing. More MOBA games can be chosen to apply and test MOBA-Slice. Further, current TSE model is simply based on single time slice. Sequential prediction models can be designed with Recurrent Neural Network to take in time slice sequences, which will provide more information about the game situation.

\bibliographystyle{splncs04}
\bibliography{reference}

\begin{thebibliography}{10}
\providecommand{\url}[1]{\texttt{#1}}
\providecommand{\urlprefix}{URL }
\providecommand{\doi}[1]{https://doi.org/#1}

\bibitem{Almeida2017Prediction}
Almeida, C.E.M., Correia, R.C.M., Eler, D.M., Olivete-Jr, C., Garci, R.E.,
  Scabora, L.C., Spadon, G.: Prediction of winners in moba games. In:
  Information Systems and Technologies. pp.~1--6 (2017)

\bibitem{Kresten2009EXPERIMENTS}
Andersen, K.T., Zeng, Y., Christensen, D.D., Tran, D.: Experiments with online
  reinforcement learning in real-time strategy games. Applied Artificial
  Intelligence  \textbf{23}(9),  855--871 (2009)

\bibitem{Bauckhage2014Beyond}
Bauckhage, C., Sifa, R., Drachen, A., Thurau, C.: Beyond heatmaps:
  Spatio-temporal clustering using behavior-based partitioning of game levels.
  In: Computational Intelligence and Games. pp.~1--8 (2014)

\bibitem{Cavadenti2016What}
Cavadenti, O., Codocedo, V., Boulicaut, J.F., Kaytoue, M.: What did i do wrong
  in my moba game? mining patterns discriminating deviant behaviours. In: IEEE
  International Conference on Data Science and Advanced Analytics. pp. 662--671
  (2016)

\bibitem{Chung2005Monte}
Chung, M., Buro, M., Schaeffer, J.: Monte carlo planning in rts games. In: IEEE
  Symposium on Computational Intelligence and Games. pp. 117--124 (2005)

\bibitem{AIIDEStar}
Churchill, D.: Aiide starcraft ai competition.
  \url{http://www.cs.mun.ca/~dchurchill/starcraftaicomp/index.shtml} (2017)

\bibitem{ConleyHow}
Conley, K., Perry, D.: How does he saw me? a recommendation engine for picking
  heroes in dota 2

\bibitem{Dangauthier2007TrueSkill}
Dangauthier, P., Herbrich, R., Minka, T., Graepel, T.: Trueskill through time:
  revisiting the history of chess. In: International Conference on Neural
  Information Processing Systems. pp. 337--344 (2007)

\bibitem{AIIDE1715902}
Hanke, L., Chaimowicz, L.: A recommender system for hero line-ups in moba games
  (2017),
  \url{https://aaai.org/ocs/index.php/AIIDE/AIIDE17/paper/view/15902/15164}

\bibitem{Hao2015Player}
Hao, Y.O., Deolalikar, S., Peng, M.: Player behavior and optimal team
  composition for online multiplayer games. Computer Science pp. 4351--4365
  (2015)

\bibitem{kingma2015adam:}
Kingma, D.P., Ba, J.L.: Adam: A method for stochastic optimization.
  international conference on learning representations  (2015)

\bibitem{Siddhesh2012Evolutionary}
Kolwankar, S.V., Kolwankar, S.V.: Evolutionary artificial intelligence for moba
  / action-rts games using genetic algorithms. pp. 29--31 (2012)

\bibitem{Pratama2017Fuzzy}
Pratama, N.P.H., Nugroho, S.M.S., Yuniarno, E.M.: Fuzzy controller based ai for
  dynamic difficulty adjustment for defense of the ancient 2 (dota2). In:
  International Seminar on Intelligent Technology and ITS Applications. pp.
  95--100 (2017)

\bibitem{Pu2014Identifying}
Pu, Y., Brent, H., Roberts, D.L.: Identifying patterns in combat that are
  predictive of success in moba games. In: Foundations of Digital Games 2014
  Conference (2014)

\bibitem{DBLP:journals/corr/abs-1710-05941}
Ramachandran, P., Zoph, B., Le, Q.V.: Searching for activation functions. CoRR
  \textbf{abs/1710.05941} (2017), \url{http://arxiv.org/abs/1710.05941}

\bibitem{Sch2007TrueSkill}
Scholkopf, B., Platt, J., Hofmann, T.: TrueSkill: A Bayesian Skill Rating
  System. MIT Press (2007)

\bibitem{Silva2016On}
Silva, V.D.N., Chaimowicz, L.: On the development of intelligent agents for
  moba games pp. 142--151 (2016)

\bibitem{silver2016mastering}
Silver, D., Huang, A., Maddison, C.J., Guez, A., Sifre, L., Den~Driessche,
  G.V., Schrittwieser, J., Antonoglou, I., Panneershelvam, V., Lanctot, M.,
  et~al.: Mastering the game of go with deep neural networks and tree search.
  Nature  \textbf{529}(7587),  484--489 (2016)

\bibitem{srivastava2014dropout:}
Srivastava, N., Hinton, G.E., Krizhevsky, A., Sutskever, I., Salakhutdinov, R.:
  Dropout: a simple way to prevent neural networks from overfitting. Journal of
  Machine Learning Research  \textbf{15}(1),  1929--1958 (2014)

\bibitem{Sutton:1998:IRL:551283}
Sutton, R.S., Barto, A.G.: Introduction to Reinforcement Learning. MIT Press,
  Cambridge, MA, USA, 1st edn. (1998)

\bibitem{Synnaeve2011A}
Synnaeve, G., Bessiere, P.: A bayesian model for opening prediction in rts
  games with application to starcraft. In: Computational Intelligence and
  Games. pp. 281--288 (2011)

\bibitem{Wang2016Predicting}
Wang, W.: Predicting multiplayer online battle arena (moba) game outcome based
  on hero draft data  (2016)

\bibitem{Watkins1992}
Watkins, C.J.C.H., Dayan, P.: Q-learning. Machine Learning  \textbf{8}(3),
  279--292 (May 1992). \doi{10.1007/BF00992698},
  \url{https://doi.org/10.1007/BF00992698}

\bibitem{Wi2016Applying}
Wi{\'s}niewski, M., Niewiadomski, A.: Applying artificial intelligence
  algorithms in moba games  (2016)

\bibitem{Yang2013Knowledge}
Yang, P., Roberts, D.L.: Knowledge discovery for characterizing team success or
  failure in (a)rts games. In: Computational Intelligence in Games. pp.~1--8
  (2013)

\end{thebibliography}

\end{document}